\documentclass{article}
\usepackage{iclr2027_conference,times}

\usepackage{amsmath,amsfonts,bm}

\def\eqref#1{equation~\ref{#1}}
\def\1{\bm{1}}

\DeclareMathAlphabet{\mathsfit}{\encodingdefault}{\sfdefault}{m}{sl}
\SetMathAlphabet{\mathsfit}{bold}{\encodingdefault}{\sfdefault}{bx}{n}

\usepackage[hypertexnames=false]{hyperref}
\usepackage{url}
\usepackage{booktabs}
\usepackage{enumitem}
\usepackage{amsmath,amssymb,amsthm}
\usepackage{algorithm,algpseudocode}
\usepackage{mathtools}
\usepackage{makecell}
\usepackage{graphicx}
\usepackage{wrapfig}
\usepackage{subcaption}
\usepackage{fontawesome5}
\usepackage[table]{xcolor}
\usepackage{tabularx}
\usepackage{tikz}

\definecolor{RAOne}{HTML}{6FA99A}
\definecolor{RATwo}{HTML}{72A8AD}
\definecolor{RAThree}{HTML}{7D9FBB}
\definecolor{RAFour}{HTML}{8D97C1}
\definecolor{RAFive}{HTML}{A391BC}
\definecolor{RASix}{HTML}{B38FAA}

\tikzset{
  ra/category/.style={
    rounded corners=1.5pt,
    line width=0.35pt,
    fill opacity=0.08,
    draw opacity=1,
    text opacity=1,
    text=black!85,
    inner xsep=3.5pt,
    inner ysep=2pt,
    outer sep=0pt,
    text height=1.7ex,
    text depth=0.45ex,
    font=\sffamily\scriptsize
  }
}

\newcommand{\RAGroup}[3]{%
  \textcolor{#1!55!black}{\textbf{#2: #3}}%
}

\newcommand{\RACategory}[3]{%
  \tikz[baseline=(ra-tag.base)]{%
    \node[ra/category, draw=#1!75!white, fill=#1] (ra-tag)
      {\textcolor{#1!55!black}{\textbf{#2}}\enspace #3};%
  }%
}

\newtheorem{definition}{Definition}
\newtheorem{theorem}{Theorem}

\newtheorem{proposition}[theorem]{Proposition}

\definecolor{tabband}{RGB}{240,240,245}

\title{ARIA: Runtime Inspection and Amendment for Multi-Agent Systems}

\author{
Bolin Shen\\
Florida State University
\And
Yushun Dong\thanks{Corresponding author.}\\
Florida State University
}

\iclrfinalcopy

\begin{document}

\maketitle

% \ariabadge
%   {https://anonymous.4open.science/w/aria-project}
%   {https://github.com/LabRAI/ARIA}
%   {https://huggingface.co/datasets/anonymous/aria}

\begin{abstract}
In LLM-based multi-agent systems (MAS), a local mistake can propagate as other agents build on it, turning a correctable error into a failed task. Diagnosing completed runs or repeatedly re-executing them to search for corrections does not provide a learned policy for intervening while execution is still in progress. We present Agent Runtime Inspection and Amendment (ARIA), a framework for learning when and how to repair ongoing multi-agent executions. Our key intuition is to turn execution-verified repairs into reusable supervision, moving repair search into data construction and training so that a learned model can intervene at runtime without repeatedly testing alternatives. First, we collect over 24{,}000 successful and failed trajectories across four multi-agent frameworks, six benchmarks, and eight host models. From these trajectories, we construct the ARIA dataset, to our knowledge the first large-scale dataset pairing failed multi-agent trajectories with execution-verified repairs. It contains 3{,}953 verified repair pairs, each specifying what went wrong, where to intervene, and how execution should continue. Then, we develop the ARIA framework for MAS-agnostic runtime inspection and amendment, designed to integrate with any MAS that exposes execution trajectories and supports continuation replacement. Using verified repairs as training examples, we teach a compact model to recognize when intervention is needed and how to correct an ongoing run. The aim is to learn repair strategies that apply to related failures in new configurations, rather than merely reproduce individual corrections. Finally, the trained ARIA model demonstrates strong repair performance in both post-hoc and runtime settings, with generalization to unseen frameworks and host models. Our code is available at \url{https://github.com/LabRAI/ARIA}.
\end{abstract}

% =========================================================================
% Introduction: Multi-Agent Systems and the Need for Runtime Repair
% =========================================================================

\section{Introduction}
\label{sec:introduction}

The use of large language models (LLMs) has expanded from text generation and dialogue~\citep{brown2020gpt3,touvron2023llama2} to agents that plan, invoke tools, and adapt to feedback~\citep{yao2023react,schick2023toolformer,shinn2023reflexion}. Multi-agent systems (MAS) organize agents into coordinated workflows with distinct roles, decomposed tasks, and exchanges through messages or shared artifacts~\citep{li2023camel,wu2023autogenenablingnextgenllm,chen2024agentverse,guo2024massurvey}. This division of labor combines specialized capabilities with cross-agent critique to refine intermediate reasoning~\citep{du2024debate,liang2024debate}. Applications include software development, where MetaGPT and ChatDev coordinate design, implementation, and testing~\citep{hong2024metagpt,qian2024chatdev}; scientific experimentation, where Coscientist integrates planning, search, code execution, and laboratory tools~\citep{boiko2023coscientist}; medical reasoning with MedAgents and MDAgents~\citep{tang2024medagents,kim2024mdagents}; interactive social simulation~\citep{park2023generativeagents}; and general-purpose web and file tasks with Magentic-One~\citep{fourney2024magenticone}. However, distributed work creates dependencies through which an incorrect intermediate result may become the premise for another agent's reasoning or the input to a later tool call~\citep{hong2024metagpt,guo2024massurvey}. Misunderstood messages, inconsistent assumptions, or inadequate verification can carry errors across handoffs, turning a local mistake into an execution-level failure~\citep{cemri2025mast,hammond2025multiagentrisks}. These dependencies motivate runtime inspection and repair before actions compound a faulty state, rather than relying solely on final-output evaluation.

Work on execution reliability addresses three questions: what fails, where it originates, and how successful execution can be restored. Failure taxonomies characterize recurring patterns: the Multi-Agent System Failure Taxonomy (MAST)~\citep{cemri2025mast} identifies 14 modes spanning system design, inter-agent misalignment, and task verification, while a complementary analysis~\citep{hammond2025multiagentrisks} examines interaction-level risks from miscoordination, conflict, and collusion. Incident-level diagnosis instead locates the decisions responsible for failed runs. The Who\&When benchmark~\citep{zhang2025whichagent} evaluates agent- and step-level failure attribution; AgentRx~\citep{barke2026agentrx} uses constraints derived from task requirements and tool specifications to identify critical failure steps; and ErrorProbe~\citep{li2026errorprobe} combines backward tracing with execution-based verification and retains verified error patterns. Repair methods then test whether changing suspected failure points improves outcomes. CausalFlow~\citep{bonagiri2026causalflow} searches for minimal counterfactual edits to single-agent traces, whereas DoVer~\citep{ma2026dover} tests message or plan changes in multi-agent systems by re-executing the affected continuation. These works support retrospective diagnosis and repair, but runtime intervention must determine from an incomplete trajectory whether a failure has occurred and how to amend the continuation. Learning this capability from past executions presents three connected challenges. First, reliable supervision must pair a failed trajectory with a well-supported diagnosis and an effective correction: task-level labels do not identify the responsible decisions~\citep{zhang2025whichagent,barke2026agentrx}, and plausible repairs require evidence of restored success. Second, collecting such supervision at scale requires selecting productive sources across MAS frameworks, tasks, and backbone models. Configurations that rarely fail supply few examples, yet frequent failures may still yield few verified repairs when correction is difficult; task accuracy alone therefore cannot identify sources rich in useful failure--repair pairs. Third, incident-specific corrections do not directly specify how to act on unseen, unfinished trajectories. A runtime model must decide whether a prefix warrants intervention and generate an amendment that improves subsequent execution. Together, these challenges call for connecting reliable supervision, repair-aware data collection, and joint learning of runtime inspection and amendment.

To address these challenges, we propose Agent Runtime Inspection and Amendment (ARIA), which couples an execution-verified dataset with a framework for learning when and how to intervene in multi-agent executions. We collect over 24{,}000 successful and failed trajectories across four MAS frameworks, six benchmarks, and eight host models. To obtain useful supervision efficiently, we introduce a source selection rule based on repair optimality that balances failure frequency with repairability, favoring configurations that yield more verified repairs per execution. We connect failure diagnoses to targeted corrections and retain a pair only when the repaired execution passes benchmark evaluation while its unmodified counterpart fails. This process produces the ARIA dataset of 3{,}953 verified failure--repair pairs, each recording what went wrong, where to intervene, and how execution should continue. Building on this dataset, the ARIA framework trains a compact model in three stages. First, supervised learning teaches the model to recognize when intervention is needed and to generate repairs from verified examples. Second, preference learning~\citep{rafailov2023dpo} favors successful repairs over failed continuations and diagnoses that help predict repairs for related incidents. Third, policy optimization uses execution feedback on the model's own proposals to improve repair effectiveness and cross-configuration transfer, while refining intervention decisions to recover failures and limit harmful interventions. The resulting model supports both runtime inspection and amendment and post-hoc repair.

\paragraph{Contributions.}
Our contributions are threefold:
\begin{itemize}[leftmargin=2em]
    \item \textbf{A large-scale failure repair dataset.} To the best of our knowledge, ARIA is the first large-scale dataset pairing failed multi-agent trajectories with execution-verified repairs. It spans four MAS frameworks, six benchmarks, and eight backbone models, enabling systematic study of failure repair across diverse multi-agent systems.

    \item \textbf{A novel MAS-agnostic runtime repair framework.} We introduce ARIA, to the best of our knowledge the first MAS-agnostic framework for runtime failure repair. ARIA operates on partial execution trajectories to detect and repair failures without access to the host system's internals, enabling integration into diverse MAS frameworks.

    \item \textbf{State-of-the-art repair performance.} Our post-training approach equips a compact model with both a gate for runtime error detection and a patching capability for failure repair. Extensive experiments show that even a 7B ARIA model achieves state-of-the-art executed repair accuracy in both runtime and post-hoc settings, outperforming zero-shot frontier models.
\end{itemize}

% =========================================================================
% Methodology: Dataset Creation and Post-Training
% =========================================================================

\section{Methodology}
\label{sec:methodology}

ARIA consists of a corpus and a model trained on it. Section~\ref{subsec:dataset-creation} determines which source configurations to execute and records all resulting source trajectories, and Section~\ref{subsec:post-training} describes the three-stage offline procedure that turns those records into a predictor.

% =========================================================================
% Dataset Creation subsection for inclusion in the main manuscript.
% Required table packages in the main preamble:
% \usepackage{booktabs,tabularx}
% The main manuscript supplies the existing theorem environments and bibliography.

\subsection{Dataset Creation}
\label{subsec:dataset-creation}

\begin{wrapfigure}{r}{0.55\textwidth}
\centering
\includegraphics[width=\linewidth]{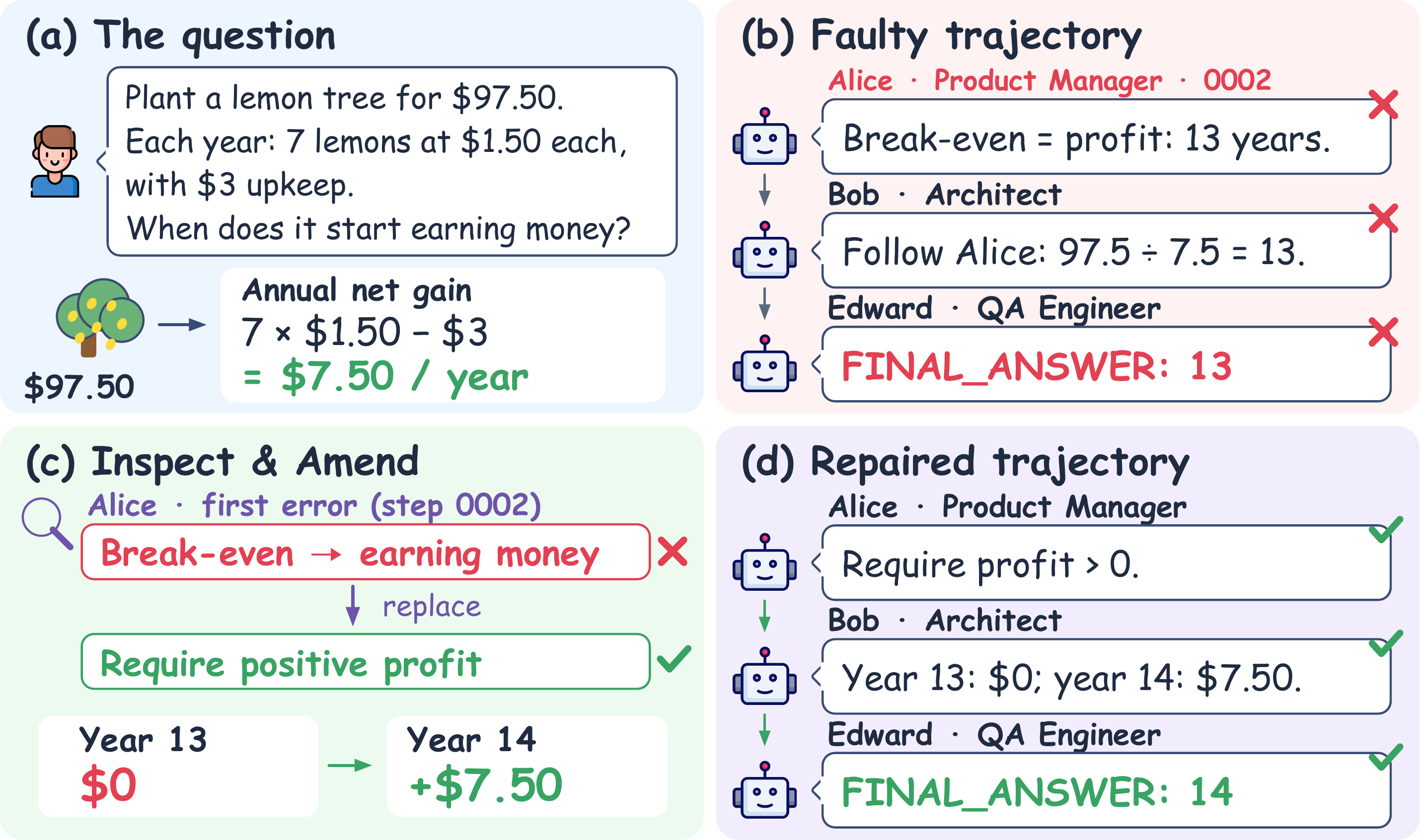}
\caption{An illustrative ARIA repair: localizing a task misunderstanding and patching the faulty step restores the correct result.}
\label{fig:repair-case}
\vspace{-0.7em}
\end{wrapfigure}

\paragraph{From failure analysis to repair supervision.}
Multi-agent systems (MAS) exhibit diverse failures, ranging from instruction violations and reasoning errors to inter-agent misalignment and inadequate verification~\citep{cemri2025mast,jia2026masfire}.
Existing datasets, such as MAST-Data and Who\&When, collect and annotate execution trajectories to characterize failure modes or identify the agents and steps responsible for unsuccessful outcomes~\citep{cemri2025mast,zhang2025whichagent}.
However, these failure-analysis datasets do not pair failed trajectories with executable repairs and benchmark-verified successful continuations, leaving a gap between understanding failures and learning how to recover from them.
While intervention-based debugging has demonstrated that failures can be repaired through targeted modifications and re-execution~\citep{ma2026dover}, our objective is to turn such recoveries into reusable training supervision.
We therefore construct a dataset that pairs failed trajectories with execution-verified repairs.
Its construction addresses two connected questions: which source configurations yield the most usable repair examples, and how should repairs be represented so that their effectiveness can be verified rather than merely asserted?

\paragraph{Repair-optimality for source selection.}
The most useful source of repair supervision is not necessarily the system that fails most often or the one with the strongest backbone; it is the one that produces the most successfully repairable failures per execution.
Let a source cell be a triple $c=(A,B,M)$, where $A$ is a multi-agent system with a fixed harness, $B$ is a benchmark with a fixed task distribution, and $M$ is the foundation model that the agents call.
Model selection must account for the interaction between these components: the same model can induce different failure distributions under different harness and benchmark combinations, and fault-injection measurements show that stronger foundation models do not uniformly improve robustness and that coordination topology is an equally decisive factor~\citep{jia2026masfire}.
We therefore compare candidate models within each fixed pair $(A,B)$, using a fixed repair protocol, rather than ranking them globally.
The following principle captures the trade-off underlying this selection.

\begin{definition}[Repair-optimality principle]
\label{def:repair-optimality}
For a fixed MAS--benchmark pair $(A,B)$, model selection balances two
quantities: failure availability, which determines the supply of candidate
repair instances, and conditional repairability, which determines how often
those failures yield benchmark-verified repairs under a fixed repair protocol.
A model that rarely fails provides too few repair instances, whereas a model
whose failures are rarely repairable provides little usable supervision.
\end{definition}

\noindent Definition~\ref{def:repair-optimality} states this trade-off qualitatively; the following proposition, proved in Appendix~\ref{app:repair-utility-proof}, makes it operational by identifying the single quantity that governs how much supervision a cell returns per execution.

\begin{proposition}[Repair utility score]
\label{prop:repair-utility-score}
For a cell $c$, let $a(c)$ be the probability that a valid source execution succeeds, and let $r(c)$ be the conditional probability that an attempted repair of a failed source execution succeeds under the benchmark evaluator. Define
\begin{equation}
\label{eq:repair-utility}
S(c) := (1-a(c))\,r(c).
\end{equation}
For $N$ independent valid source executions drawn from the same task distribution, the expected number of successfully repaired failures is $N S(c)$.
\end{proposition}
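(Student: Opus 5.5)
The argument is linearity of expectation over per-execution indicator variables, so the first step is to make the probabilistic model explicit. For each valid source execution $i\in\{1,\dots,N\}$ in cell $c$, let $F_i$ be the indicator that execution $i$ fails. Let $R_i$ be the indicator that execution $i$ fails \emph{and} its attempted repair, under the fixed repair protocol of Definition~\ref{def:repair-optimality}, succeeds under the benchmark evaluator. The executions are drawn from the same task distribution, so each $F_i$ is Bernoulli with $\Pr(F_i=1)=1-a(c)$. By the definition of $r(c)$ as a conditional probability, $\Pr(R_i=1\mid F_i=1)=r(c)$. Since $R_i\le F_i$, we also have $\Pr(R_i=1\mid F_i=0)=0$.

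Next, compute the per-execution expectation by the law of total probability:
\begin{equation*}
\mathbb{E}[R_i]=\Pr(F_i=1)\Pr(R_i=1\mid F_i=1)+\Pr(F_i=0)\cdot 0=(1-a(c))\,r(c)=S(c).
\end{equation*}
The number of successfully repaired failures is $K=\sum_{i=1}^N R_i$. Linearity of expectation then gives $\mathbb{E}[K]=N S(c)$.

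The main subtlety is the modeling convention, not the calculation.
\begin{itemize}
\item The conditional repair probability $r(c)$ must be the same for every failed execution, marginalized over tasks and repair randomness, so that the conditional step is identically distributed across $i$.
\item Each failure receives exactly one repair attempt.
\end{itemize}
I would state both explicitly as the reading of the proposition's hypotheses. Independence is not needed for the expectation, only for concentration. It is worth remarking that independence gives $K\sim\mathrm{Binomial}(N,S(c))$, which justifies ranking models within a fixed $(A,B)$ by $S(c)$ as a per-execution yield.
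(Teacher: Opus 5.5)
Your proposal is correct and is essentially the paper's proof: the paper also defines per-execution indicators $F_i$ and $R_i$, asserts $\mathbb{E}[F_iR_i]=(1-a(c))\,r(c)$, and sums over the $N$ executions by linearity. Your $R_i$ is the paper's product $F_iR_i$, and your law-of-total-probability step spells out the conditioning that the paper leaves implicit.
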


\noindent Equation~\ref{eq:repair-utility} thus measures the expected yield of verified repair examples, rather than failure frequency or source-task accuracy alone.
Over a finite candidate set $\mathcal{M}$, we select $M(A,B)\in\arg\max_{M\in\mathcal{M}}S(A,B,M)$ independently for each pair in the experimental grid.
This rule identifies repair-optimal source cells without imposing a global ordering over models.
We estimate $a(c)$ from valid source executions and $r(c)$ as the fraction of attempted repairs that yield a verified success under the fixed repair protocol.
Invalid infrastructure runs are excluded from source accuracy, while unresolved repair attempts are reported as a separate status rather than as incorrect task solutions; they do not count as verified repairs.
This distinction prevents infrastructure failures from inflating failure availability and unresolved attempts from inflating verified repair yield.

\paragraph{From failure taxonomies to repair attributions.}
Failure taxonomies describe errors but do not specify how to correct them~\citep{cemri2025mast}.
We introduce \emph{repair attributions} (RA), comprising six groups and 26 categories that link localized failure mechanisms to candidate interventions.
Each attribution guides where to cut the trajectory and how to replace its continuation.
We retain repair supervision only when paired executions confirm that the patched run succeeds while the unpatched control fails.
The attributions then serve as training targets and conditioning information for repair learning.
Appendix~\ref{app:repair-attributions} provides the full taxonomy, annotation criteria, intervention examples, and validation details.

% =========================================================================
\subsection{Post-Training for Generalizable Repair}
\label{subsec:post-training}

\begin{figure}[!t]
\centering
\includegraphics[width=\linewidth]{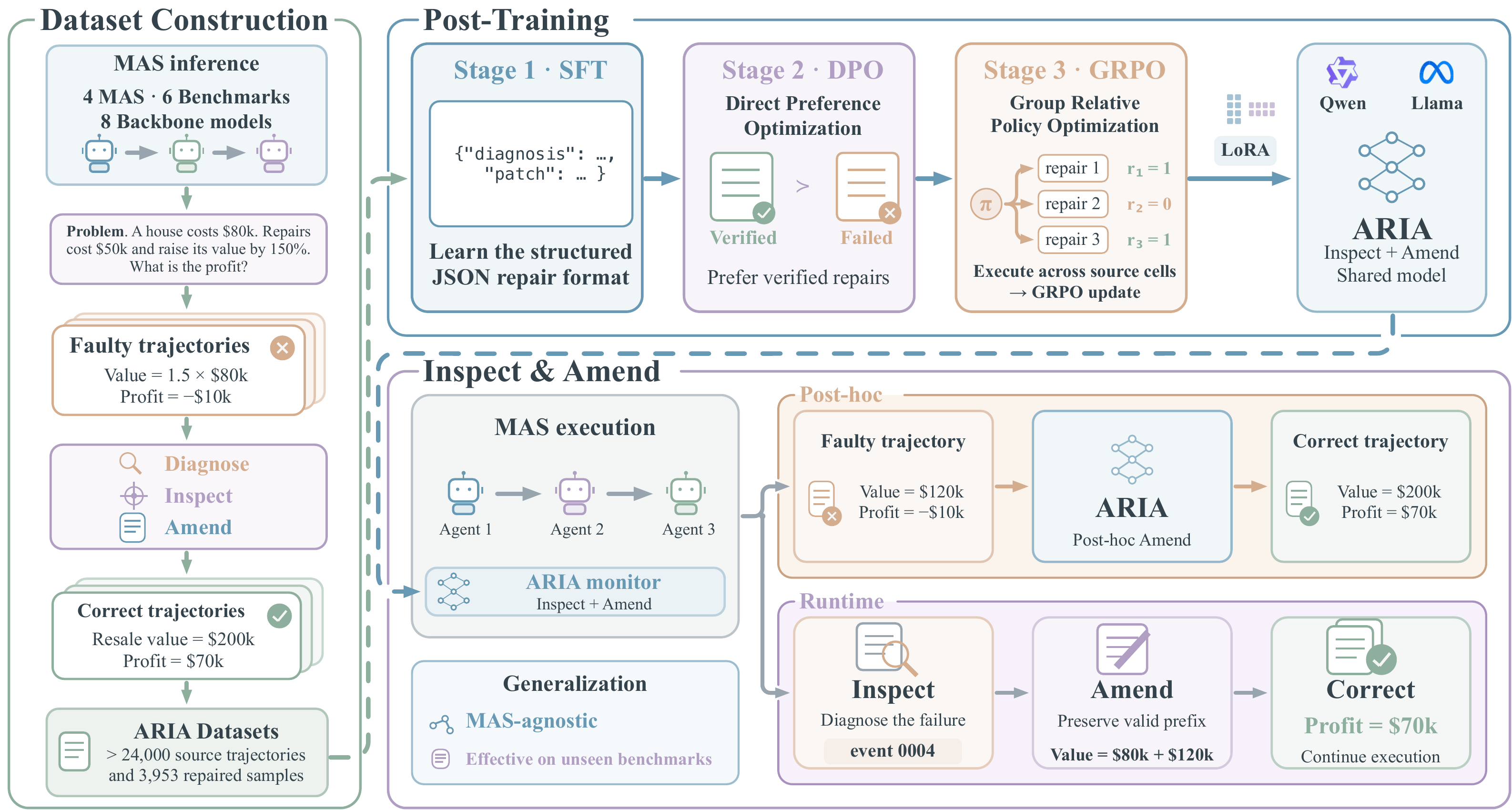}
\caption{Overview of ARIA, which constructs execution-verified repair data, learns generalizable repair strategies through three-stage post-training, and applies them at runtime to inspect and amend failing multi-agent trajectories.}
\label{fig:aria-framework}
\end{figure}

We learn structured repairs $y_i=(d_i,k_i,e_i)$ from
$\mathcal{D}=\{(x_i,y_i)\}_{i=1}^{N}$ in three stages:
structured supervision, outcome-anchored preference learning, and
execution-grounded policy optimization. The first two stages use
corpus-derived supervision; the third evaluates current-policy samples
to address distribution shift~\citep{ross2011reduction} and align training
with task recovery and intervention costs. Appendix~\ref{app:post-training}
provides the stage overview and implementation details.

\paragraph{Stage 1: Structured Repair SFT.}
\label{subsubsec:stage1}
We first learn the output schema from records with execution-verified
repairs. Under constrained decoding, we factorize the supervised loss as
\begin{equation}
\label{eq:sft}
\begin{aligned}
\mathcal{L}_{\mathrm{SFT}}(\theta)
= -\mathbb{E}_{(x_i,y_i)\sim\mathcal{D}}\big[
&\log p_\theta(d_i\mid\tilde{x}_i)
+\log p_\theta(k_i\mid\tilde{x}_i,d_i)+\log p_\theta(e_i\mid\tilde{x}_i,d_i,k_i)
\big].
\end{aligned}
\end{equation}
Feedback dropout in $\tilde{x}_i$ supports both feedback-conditioned
post-hoc repair and feedback-free runtime repair. We jointly train an
inspection gate $g_\theta(\tau_{i,\leq t})$ to detect whether the certified
failure has occurred, using $z_{i,t}=\mathbb{1}[t>k_i]$ for validated cuts
and successful repaired prefixes as negatives. Thus, Stage 1 initializes
both structured repair and runtime inspection; its checkpoint $\theta_0$
also serves as the Stage 2 reference policy.

\paragraph{Stage 2: Outcome-Anchored Preference Learning.}
\label{subsubsec:stage2}
Verified demonstrations do not explicitly rank repairs against failed
alternatives. We therefore prefer the verified repair $a^+=(k_i,e_i)$
to the observed failed continuation $a^-=(k_i,\tau_{i,>k_i})$ under
the common context $h=(\tilde{x}_i,\hat d)$; both retain the prefix
$\tau_{i,\leq k_i}$. Writing
$\Delta_\theta(u\mid h)=\log[p_\theta(u\mid h)/p_{\theta_0}(u\mid h)]$,
we minimize the DPO loss~\citep{rafailov2023dpo}:
\begin{equation}
\label{eq:pt-dpo}
\mathcal{L}_{\mathrm{DPO}}(\theta)
=-\mathbb{E}_{(h,a^+,a^-)}
\log\sigma\!\left(
\beta\left[\Delta_\theta(a^+\mid h)-\Delta_\theta(a^-\mid h)\right]
\right),
\end{equation}
where $\sigma$ is the logistic sigmoid and $\beta$ controls reference
regularization. This anchors repair preferences to verified outcome
contrasts.
Diagnoses lack direct outcome labels, so we favor those that improve
repair prediction for related training incidents. For a candidate $d$,
we measure its uplift on calibration siblings
$F_{\mathrm{cal}}(i)\subset F(i)$:
\begin{equation}
\label{eq:uplift}
U_i(d)=\frac{1}{|F_{\mathrm{cal}}(i)|}
\sum_{j\in F_{\mathrm{cal}}(i)}
\left[
\log p_{\bar\theta}(k_j,e_j\mid x_j,d)
-\log p_{\bar\theta}(k_j,e_j\mid x_j)
\right],
\end{equation}
where $F(i)$ groups related incidents and $\bar\theta$ is a frozen scorer
initialized from $\theta_0$. This is a likelihood-based transfer proxy,
not executed repair success. High-uplift diagnoses that also improve a
held-out training sibling are preferred to low-uplift candidates using
the same DPO objective. We then internalize the selected diagnoses
alongside verified repairs through supervised learning, yielding
$\theta_1$. Family construction, selection criteria, and internalization
are detailed in Appendix~\ref{app:pt-stage2}.

\paragraph{Stage 3: Transfer-Grounded Policy Optimization.}
\label{subsubsec:stage3}
We initialize $\pi_\theta$ at $\theta_1$ and evaluate its samples
through re-execution. For a statically valid record
$y=(\hat d,\hat k,\hat e)$, the environment of
Section~\ref{subsec:dataset-creation} restores the state at $\hat k$,
substitutes $\hat e$, and resumes execution, returning
\begin{equation}
\label{eq:stage3-reward}
r(y\mid x)
=\mathbb{1}[\text{evaluator accepts}]
-\lambda_h\,\mathbb{1}[\text{harmful}]
-\lambda_c\,b(y),
\end{equation}
where $b(y)$ is execution overhead. Unlike the Stage 2 proxy, this reward
measures task recovery and intervention costs through execution.
A diagnosis is not executable, so we credit it through the repairs it
induces on its source incident and on related incidents from other
source cells. For each sampled diagnosis $d_q$, we generate $n$ source
repair records $y_{q,j}$ and $n_t$ sibling repair records $y'_{q,l}$,
all conditioned on $d_q$, and define
\begin{equation}
\label{eq:diagnosis-value}
\begin{aligned}
V(d_q)
={}&(1-\alpha)\frac{1}{n}\sum_{j=1}^{n}
 r(y_{q,j}\mid\tilde{x}_i)+\alpha\frac{1}{n_t}\sum_{l=1}^{n_t}
 r(y'_{q,l}\mid\tilde{x}_{\nu_l(i)}).
\end{aligned}
\end{equation}
Here $\nu_l(i)\in F(i)$ satisfies $c(\nu_l(i))\neq c(i)$, where $c$
indexes the framework--benchmark--host-model configuration, and $\alpha$
weights cross-cell transfer. Diagnosis and repair tokens receive
separately normalized GRPO advantages~\citep{shao2024deepseekmath}, with
regularization toward $\pi_{\theta_1}$. This favors diagnoses with
executed cross-cell utility while optimizing repairs within each
diagnosis.

Finally, the inspection gate must balance recovering failed runs against
damaging runs that would otherwise succeed. With repair parameters frozen,
we optimize complete-task rollouts using the matched-seed reward
\begin{equation}
\label{eq:gate-reward}
r_{\mathrm{gate}}
=S_{\mathrm{int}}-S_{\mathrm{base}}-\lambda_c\,\Delta\mathrm{EB},
\end{equation}
where $S_{\mathrm{int}},S_{\mathrm{base}}\in\{0,1\}$ indicate task success
with and without gate-controlled intervention under the same seed, and
$\Delta\mathrm{EB}$ is the execution-overhead difference. The outcome
difference rewards recovered failures and penalizes harmful
interventions relative to the untreated baseline. Alternating repair
and gate optimization yields $\theta_2$, evaluated in
Section~\ref{sec:experiment}; sampling, stabilization, and validation
protocols are given in Appendix~\ref{app:pt-stage3}.

% =========================================================================
% Experiments
% =========================================================================

\section{Experiments}
\label{sec:experiment}

Our experiments examine ARIA's repair effectiveness, generalization, and the contributions of its post-training stages through three questions.
\textbf{RQ1:} Can ARIA recover failed executions more effectively and at lower execution cost than existing repair methods in both the post-hoc and runtime intervention settings of Section~\ref{subsec:post-training}?
\textbf{RQ2:} How well does ARIA generalize to configurations withheld from training, and how does its performance vary with backbone size?
\textbf{RQ3:} What does each of the three post-training stages contribute to ARIA's performance?

% =========================================================================
\subsection{Experimental Setup}
\label{subsec:exp-setup}

\paragraph{Backbone models.}
\emph{Host models} power the executing agents, determine the source failure distribution, and are selected per cell by Equation~\ref{eq:repair-utility}: Llama-3.1-70B-Instruct~\citep{llama2024herd}, Qwen3-8B/32B, GPT-5.5 (M/H/X reasoning tiers), Claude-Sonnet-4-6, and Claude-Opus-4.8. Only the disjoint \emph{ARIA backbones} are post-trained: Qwen2.5-Instruct at 1.5B, 3B, 7B, and 14B~\citep{qwen2025qwen25} for scaling under a shared pre-training and post-training recipe, and Llama-3.1-8B-Instruct~\citep{llama2024herd} as a cross-family control comparable in size to Qwen2.5-7B. Appendix~\ref{app:reproduction} gives the controls used for backbone comparisons.

\paragraph{Multi-agent systems.}
We use four coordination structures in the in-domain experiments: MetaGPT's fixed role pipeline~\citep{hong2024metagpt}, OpenManus's tool use and delegation~\citep{openmanus2025}, Magentic-One's ledger-based orchestration~\citep{fourney2024magenticone}, and AG2's programmable group chats~\citep{wu2023autogenenablingnextgenllm}. RQ2 additionally treats Magentic-One as an unseen system by retraining ARIA without any of its trajectories. Frameworks run on supported benchmarks; each reported cell identifies its framework, benchmark, and host model.

\paragraph{Benchmarks and evaluation sets.}
Six benchmarks span four categories: mathematics through GSM-PLUS~\citep{li2024gsmplus} and OlympiadBench~\citep{he2024olympiadbench}; knowledge through MMLU~\citep{hendrycks2021mmlu}; code synthesis and repository repair through HumanEval~\citep{chen2021humaneval} and SWE-bench-Lite~\citep{jimenez2024swebench}; and tool use through GAIA~\citep{mialon2024gaia}. RQ1 uses an \emph{in-domain} test split, while RQ2 compares in-domain and \emph{unseen system} evaluation across backbone scales. Appendix~\ref{app:reproduction} details data partitioning, held-out-set construction, and fitting restrictions.

\paragraph{Baselines.}
Post-hoc comparisons include zero-shot repair; the failure-localization methods Who\&When~\citep{zhang2025whichagent}, AgentRx~\citep{barke2026agentrx}, and ErrorProbe~\citep{li2026errorprobe}; and the intervention-validation method DoVer~\citep{ma2026dover}. Runtime comparisons include Spark to Fire~\citep{xie2026sparktofire}, which controls inter-agent messages, and SafeAgent~\citep{liu2026safeagent}, an action-mediation method included as a safety-oriented reference. Appendix~\ref{app:reproduction} gives the baseline inputs, adaptation protocol, model controls, and execution accounting.

\paragraph{Metrics.}
Success Rate (SR) measures benchmark passes among failed test trajectories in the post-hoc setting or among all test tasks in the runtime setting. Harmful Intervention Rate (HIR) measures interventions that are structurally invalid, violate task constraints, or reduce the graded benchmark score relative to no intervention. Execution Budget (EB) is the average number of full multi-agent executions per evaluated task.

% =========================================================================
% Page 7
\subsection{Repair Effectiveness and Execution Cost (RQ1)}
\label{subsec:rq1-effectiveness}

We evaluate ARIA on the in-domain test set in two complementary settings: post-hoc repair of failed trajectories and runtime amendment of ongoing executions.

\begin{table}[!t]
\centering
\footnotesize
\renewcommand{\arraystretch}{0.92}
\setlength{\tabcolsep}{3.5pt}
\caption{Post-hoc repair performance on in-domain failed test trajectories with evaluator feedback, pooled across four frameworks. We report success rate (SR; higher is better) and execution budget (EB; lower is better), with SR additionally broken down by benchmark. The best value in each comparison block is shown in \textbf{bold}.}
\vspace{-1em}
\label{tab:rq1-posthoc}
\begin{tabular*}{\linewidth}{@{\extracolsep{\fill}}lcccccccc@{}}
\toprule
& \multicolumn{6}{c}{\textbf{SR by benchmark} $\uparrow$} & \multicolumn{2}{c}{\textbf{Overall}} \\
\cmidrule(lr){2-7} \cmidrule(lr){8-9}
\textbf{Method} & GSM-P & MMLU & HEval & Olymp & GAIA & SWE-L & SR $\uparrow$ & EB $\downarrow$ \\
\midrule
\rowcolor{tabband}\multicolumn{9}{@{}c@{}}{\emph{Llama-3.1-8B backbone}} \\
Zero-shot & 7.37\% & 10.5\% & 2.63\% & 2.31\% & 0.00\% & 1.54\% & 8.37\% & 1.98 \\
Who\&When~\citeyearpar{zhang2025whichagent} & 12.4\% & 14.7\% & 0.00\% & 1.39\% & 1.92\% & 3.08\% & 12.5\% & 1.98 \\
AgentRx~\citeyearpar{barke2026agentrx} & 8.57\% & 8.46\% & 2.63\% & 3.70\% & 1.92\% & 1.54\% & 7.99\% & \textbf{1.97} \\
ErrorProbe~\citeyearpar{li2026errorprobe} & 6.82\% & 7.42\% & 2.63\% & 1.39\% & 0.00\% & 0.00\% & 6.58\% & 1.98 \\
DoVer~\citeyearpar{ma2026dover} & 1.45\% & 8.36\% & 13.2\% & 8.80\% & 3.85\% & 3.08\% & 5.39\% & 5.75 \\
\textbf{ARIA} & \textbf{16.8\%} & \textbf{18.5\%} & \textbf{15.8\%} & \textbf{9.72\%} & \textbf{7.69\%} & \textbf{4.62\%} & \textbf{16.9\%} & 2.00 \\
\midrule
\rowcolor{tabband}\multicolumn{9}{@{}c@{}}{\emph{Qwen2.5-7B backbone}} \\
Zero-shot & 25.1\% & 21.4\% & 5.26\% & 2.31\% & 1.92\% & 1.54\% & 21.2\% & 1.98 \\
Who\&When~\citeyearpar{zhang2025whichagent} & 43.5\% & 26.8\% & 7.89\% & 2.78\% & 3.85\% & 3.08\% & 31.6\% & 1.98 \\
AgentRx~\citeyearpar{barke2026agentrx} & 29.3\% & 21.9\% & 2.63\% & 4.63\% & 3.85\% & 0.00\% & 23.3\% & \textbf{1.96} \\
ErrorProbe~\citeyearpar{li2026errorprobe} & 35.4\% & 22.0\% & 2.63\% & 2.78\% & 1.92\% & 1.54\% & 25.8\% & 1.97 \\
DoVer~\citeyearpar{ma2026dover} & 7.00\% & 4.83\% & 18.4\% & 1.85\% & 5.77\% & 3.08\% & 5.69\% & 3.24 \\
\textbf{ARIA} & \textbf{47.0\%} & \textbf{30.5\%} & \textbf{21.1\%} & \textbf{6.94\%} & \textbf{7.69\%} & \textbf{6.15\%} & \textbf{35.3\%} & 2.00 \\
\bottomrule
\end{tabular*}
\vspace{-1em}
\end{table}

\paragraph{Post-hoc Repair.}
In the post-hoc setting, each method receives a complete failed trajectory together with evaluator feedback and task constraints, and the corresponding task evaluator grades the repaired continuation. Table~\ref{tab:rq1-posthoc} reveals two consistent patterns. First, ARIA achieves the highest overall SR with both backbones and leads on every benchmark, spanning the large mathematics and knowledge pools as well as the smaller code, tool-use, and repository-repair sets. The localization baselines are strongest on only a subset of reasoning tasks, while DoVer is competitive on HumanEval and OlympiadBench but substantially weaker in aggregate. ARIA's lead also persists from Llama to Qwen, showing that post-training adds value beyond the backbone's stronger zero-shot repair ability.
Second, the gain does not come from additional execution. ARIA's average EB remains essentially the same as the zero-shot and localization baselines, whereas DoVer requires more replay to validate interventions through re-execution. In the matched-budget sweep in Figure~\ref{fig:rq1-execution-budget}, ARIA leads at the smallest tested budget and improves steadily; the alternatives remain lower and grow more slowly. Additional budget therefore amplifies an already effective repair policy rather than compensating for weak initial proposals.

\begin{table}[!b]
\centering
\footnotesize
\renewcommand{\arraystretch}{0.92}
\setlength{\tabcolsep}{5pt}
\caption{Runtime intervention performance on all in-domain test tasks without evaluator feedback. We report per-framework and overall SR, HIR, and EB; HIR is measured against uninterrupted runs. The best value in each comparison block is shown in \textbf{bold}.}
\label{tab:rq1-runtime}
\begin{tabular*}{\linewidth}{@{\extracolsep{\fill}}lccccccc@{}}
\toprule
& \multicolumn{4}{c}{\textbf{SR by framework} $\uparrow$} & \multicolumn{3}{c}{\textbf{Overall}} \\
\cmidrule(lr){2-5} \cmidrule(lr){6-8}
\textbf{Method} & OpenManus & MetaGPT & Magentic-One & AG2 & SR $\uparrow$ & HIR $\downarrow$ & EB $\downarrow$ \\
\midrule
\rowcolor{tabband}\multicolumn{8}{@{}c@{}}{\emph{Llama-3.1-8B backbone}} \\
Spark to Fire~\citeyearpar{xie2026sparktofire} & 17.0\% & 7.31\% & \textbf{2.99\%} & 6.39\% & 12.5\% & 7.51\% & {1.00} \\
SafeAgent~\citeyearpar{liu2026safeagent} & 20.3\% & 20.6\% & 0.80\% & 20.9\% & 18.4\% & 5.50\% & {1.00} \\
\textbf{ARIA} & \textbf{20.8\%} & \textbf{20.8\%} & 1.20\% & \textbf{23.7\%} & \textbf{19.2\%} & \textbf{1.49\%} & {1.00} \\
\midrule
\rowcolor{tabband}\multicolumn{8}{@{}c@{}}{\emph{Qwen2.5-7B backbone}} \\
Spark to Fire~\citeyearpar{xie2026sparktofire} & 18.6\% & 8.75\% & 1.60\% & 5.70\% & 13.5\% & 8.00\% & {1.00} \\
SafeAgent~\citeyearpar{liu2026safeagent} & 19.6\% & 18.9\% & 1.20\% & 21.8\% & 17.8\% & 13.9\% & {1.00} \\
\textbf{ARIA} & \textbf{23.0\%} & \textbf{23.5\%} & \textbf{4.59\%} & \textbf{24.5\%} & \textbf{21.4\%} & \textbf{2.00\%} & {1.00} \\
\bottomrule
\end{tabular*}
\end{table}

\paragraph{Runtime Amendment.}
In the runtime setting, each method inspects ongoing executions without evaluator feedback. Table~\ref{tab:rq1-runtime} shows that ARIA achieves the strongest overall success--harm trade-off with both backbones: its higher aggregate SR is accompanied by markedly lower HIR, rather than by more aggressive intervention. The gains are also consistent across coordination structures, with ARIA leading on OpenManus, MetaGPT, and AG2. The only per-framework exception is Magentic-One with the Llama backbone, yet ARIA still retains the best aggregate SR and lowest HIR in that comparison block. Moving to the stronger Qwen backbone further widens ARIA's aggregate advantage, suggesting that additional capacity benefits both the decision to intervene and the construction of an effective amendment. Because all methods operate under the same single-execution budget, these improvements reflect a better runtime policy rather than replay-based selection.
Figure~\ref{fig:rq1-sr-hir} further shows that the 1.5B model already improves over zero-shot amendment and approaches the specialized baselines, while the 7B and 14B models shift decisively toward the upper-left region on both frameworks, simultaneously raising SR and reducing HIR. Scaling therefore improves gate selectivity as well as repair effectiveness. In Figure~\ref{fig:rq1-execution-budget}, ARIA leads at the lowest budget and remains ahead as the budget grows, confirming that its advantage appears before repeated trials become available and persists when additional execution is allowed.

% =========================================================================
% Page 8

\begin{figure}[!t]
\centering
\begin{subfigure}[t]{0.49\linewidth}
\centering
\vspace{0pt}
\includegraphics[width=\linewidth]{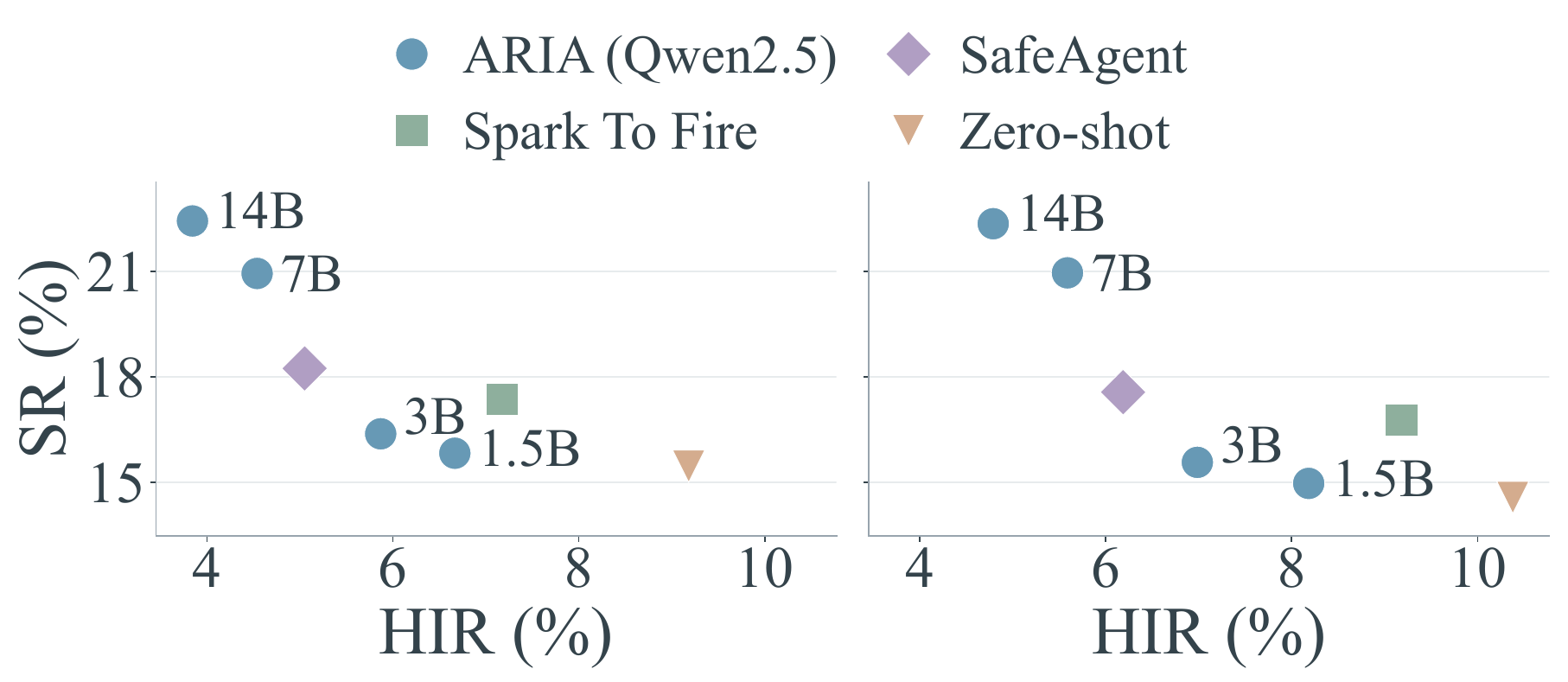}
\caption{Runtime success--harm trade-offs are shown for OpenManus (left) and MetaGPT (right).}
\label{fig:rq1-sr-hir}
\end{subfigure}\hfill
\begin{subfigure}[t]{0.49\linewidth}
\centering
\vspace{0pt}
\includegraphics[width=\linewidth]{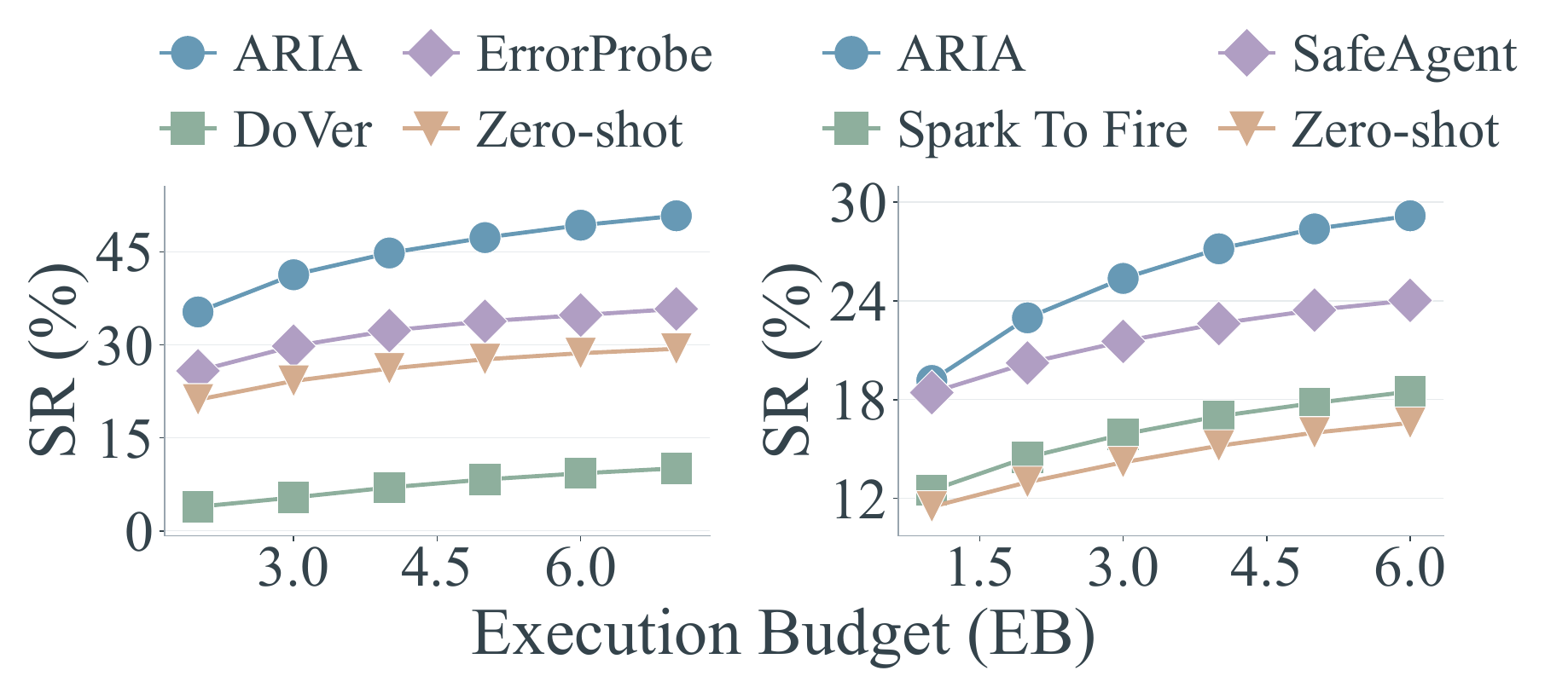}
\caption{Success rates under varying EB are shown for post-hoc repair (left) and runtime amendment (right).}
\label{fig:rq1-execution-budget}
\end{subfigure}
\caption{ARIA's repair effectiveness and execution cost.}
\label{fig:rq1-overview}
\end{figure}

% \clearpage

\begin{table}[!t]
\centering
\footnotesize
\renewcommand{\arraystretch}{0.95}
\setlength{\tabcolsep}{6pt}
\caption{Generalization performance across backbone scales in the \emph{in-domain} and \emph{unseen system} (Magentic-One trajectories excluded from training and validation) settings. We report SR and HIR.}
\label{tab:rq2-generalization}
\begin{tabular*}{\linewidth}{@{\extracolsep{\fill}}llcccc@{}}
\toprule
& & \multicolumn{2}{c}{\textbf{In-domain}} & \multicolumn{2}{c}{\textbf{Unseen System}} \\
\cmidrule(lr){3-4} \cmidrule(lr){5-6}
\textbf{Method} & \textbf{Backbone} & SR $\uparrow$ & HIR $\downarrow$ & SR $\uparrow$ & HIR $\downarrow$ \\
\midrule
\rowcolor{tabband}\multicolumn{6}{@{}c@{}}{\emph{Reference baselines}} \\
Spark to Fire~\citeyearpar{xie2026sparktofire} & Qwen2.5-7B & 17.4\% & 7.18\% & 16.8\% & 9.18\% \\
SafeAgent~\citeyearpar{liu2026safeagent} & Qwen2.5-7B & 18.2\% & 5.05\% & 17.6\% & 6.19\% \\
\midrule
\rowcolor{tabband}\multicolumn{6}{@{}c@{}}{\emph{Backbone scale}} \\
\textbf{ARIA} & Qwen2.5-1.5B & 15.8\% & 6.67\% & 15.0\% & 8.18\% \\
\textbf{ARIA} & Qwen2.5-3B & 16.4\% & 5.87\% & 15.6\% & 6.99\% \\
\textbf{ARIA} & Qwen2.5-7B & 20.9\% & 4.54\% & 21.0\% & 5.59\% \\
\textbf{ARIA} & Qwen2.5-14B & 22.4\% & 3.84\% & 22.4\% & 4.79\% \\
\midrule
\rowcolor{tabband}\multicolumn{6}{@{}c@{}}{\emph{Cross-family control}} \\
\textbf{ARIA} & Llama-3.1-8B & 20.6\% & 4.81\% & 20.6\% & 5.79\% \\
\bottomrule
\end{tabular*}
\vspace{-1em}
\end{table}

% =========================================================================
\subsection{Generalization and Scaling (RQ2)}
\label{subsec:rq2-generalization}

All RQ2 experiments use the runtime setting and the same post-training recipe across Qwen2.5 backbones from 1.5B to 14B. For \emph{unseen system} evaluation, we exclude every Magentic-One trajectory from training, validation, and checkpoint selection, then deploy the resulting model directly on Magentic-One. Success therefore requires transferring both the inspection decision and the amendment strategy rather than recalling system-specific traces.
Table~\ref{tab:rq2-generalization} shows that unseen-system SR closely tracks in-domain performance at every scale, while HIR increases only modestly. The distribution shift therefore affects intervention confidence more than the model's ability to produce useful repairs. More importantly, the model-size ordering is unchanged: larger backbones monotonically improve both recovery and selectivity, so scaling does not exchange higher success for more aggressive intervention. The small variants remain near the runtime baselines and do not yet dominate them consistently. At 7B, however, ARIA moves beyond the strongest baseline on both metrics, and the 14B model pushes the success--harm frontier further outward. This change suggests that sufficient capacity is especially useful for jointly representing the execution state, deciding whether intervention is warranted, and constructing a compatible continuation. A similarly sized Llama control closely follows Qwen2.5-7B, indicating that the effect transfers across backbone families rather than depending on Qwen-specific behavior.
Figure~\ref{fig:rq2-generalization} tests the trend under two distinct held-out coordination structures. Every increase in backbone size improves held-out SR, and neither system reverses the in-domain scale ordering. The held-out curves also rise approximately in parallel with their in-domain counterparts, showing that the benefit of additional capacity survives the system shift rather than being confined to familiar trajectories. AG2 remains closer to its in-domain curve, whereas Magentic-One exhibits the larger transfer gap and is consequently the harder setting.

% =========================================================================
% Page 9
\begin{figure}[!t]
\centering
\begin{subfigure}[t]{0.51\linewidth}
\centering
\vspace{0pt}
\includegraphics[width=\linewidth]{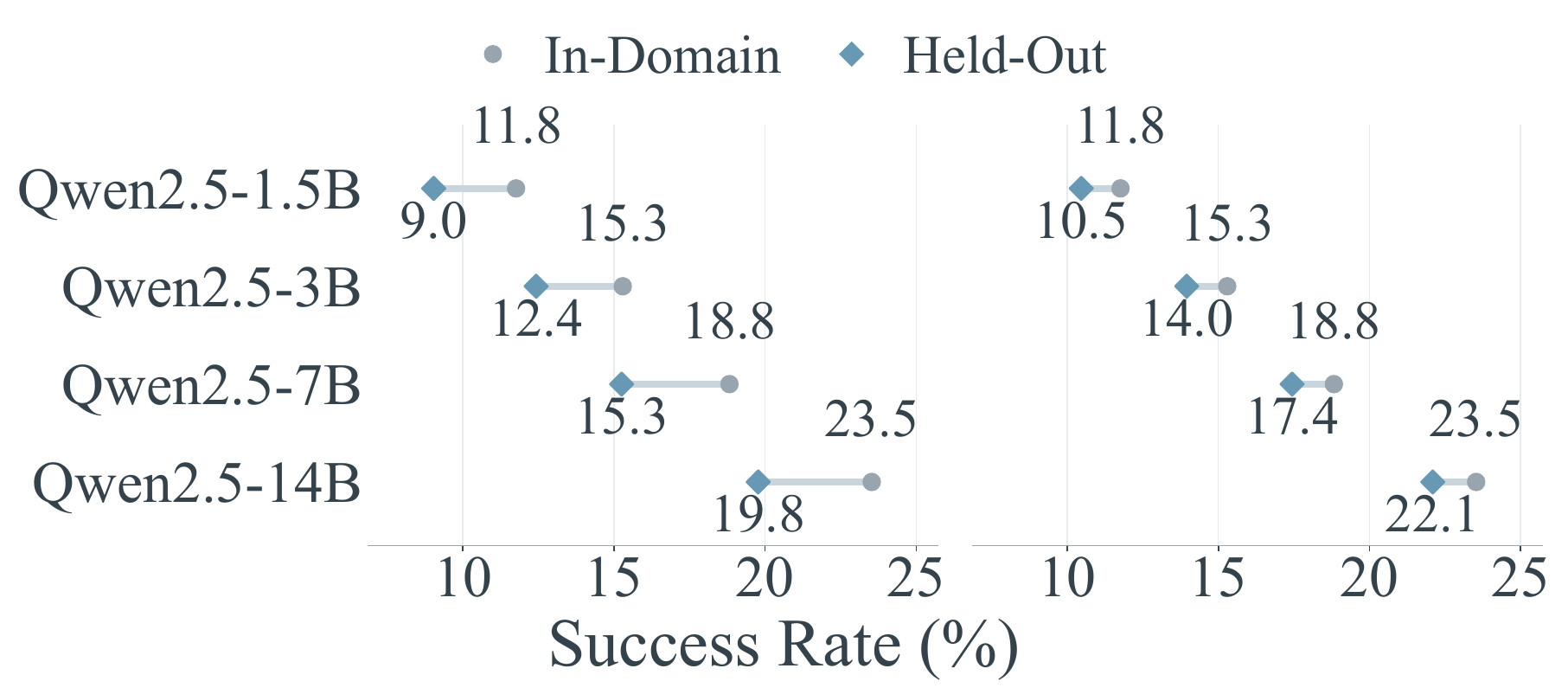}
\caption{Runtime generalization across Qwen2.5 backbone scales; left: \emph{unseen system} Magentic-One, right: \emph{unseen system} AG2.}
\label{fig:rq2-generalization}
\end{subfigure}\hfill
\begin{subfigure}[t]{0.48\linewidth}
\centering
\vspace{0pt}
\includegraphics[width=\linewidth]{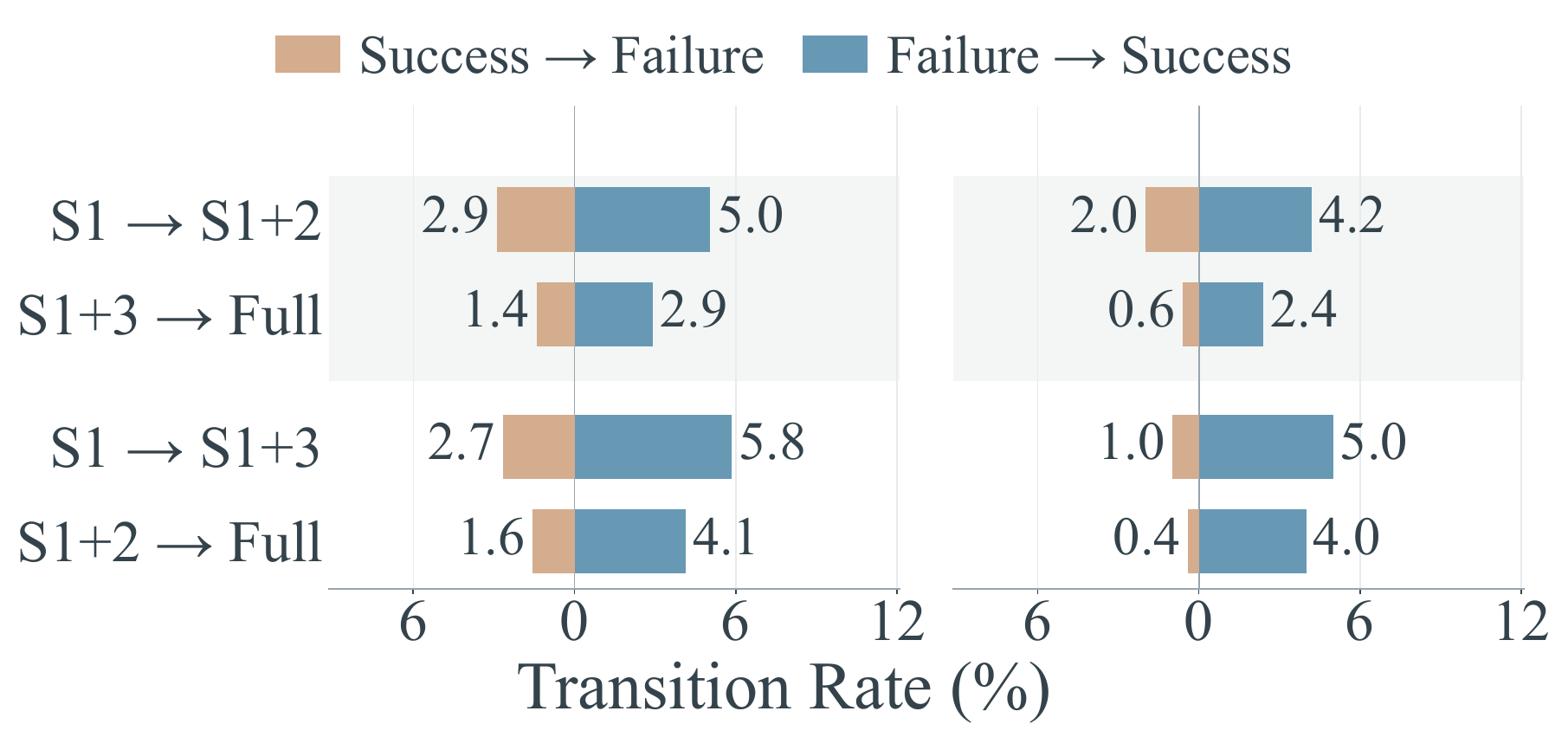}
\caption{Outcome transitions across ARIA post-training variants; left: in-domain tasks, right: \emph{unseen system} Magentic-One tasks.}
\label{fig:rq3-ablation-transitions}
\end{subfigure}
\caption{ARIA's generalization ability and the contributions of its post-training stages.}
\label{fig:rq2-rq3-overview}
\end{figure}

\begin{table}[!t]
% \vspace{-1em}
\centering
\footnotesize
\renewcommand{\arraystretch}{0.95}
\setlength{\tabcolsep}{8pt}
\caption{Ablation of ARIA's post-training stages with the Qwen2.5-7B backbone. We report SR and HIR; the best result in each column is shown in \textbf{bold}.}
\label{tab:rq3-ablation}
\begin{tabular*}{\linewidth}{@{\extracolsep{\fill}}lcccc@{}}
\toprule
& \multicolumn{2}{c}{\textbf{In-domain}} & \multicolumn{2}{c}{\textbf{Unseen System}} \\
\cmidrule(lr){2-3} \cmidrule(lr){4-5}
\textbf{Variant} & SR $\uparrow$ & HIR $\downarrow$ & SR $\uparrow$ & HIR $\downarrow$ \\
\midrule
\rowcolor{tabband}\multicolumn{5}{@{}c@{}}{\emph{Reference baselines}} \\
Zero-shot & 11.7\% & 10.3\% & 9.78\% & 11.4\% \\
\midrule
\rowcolor{tabband}\multicolumn{5}{@{}c@{}}{\emph{ARIA post-training variants}} \\
Stage 1 only (w/o Stage 2 and Stage 3) & 14.5\% & 8.22\% & 12.4\% & 8.78\% \\
Stage 1 + 2 (w/o Stage 3) & 16.6\% & 4.15\% & 14.6\% & 5.19\% \\
Stage 1 + 3 (w/o Stage 2) & 17.6\% & 3.13\% & 16.4\% & 3.79\% \\
\textbf{ARIA (Stage 1 + 2 + 3)} & \textbf{19.2\%} & \textbf{1.49\%} & \textbf{18.2\%} & \textbf{2.20\%} \\
\bottomrule
\end{tabular*}
\vspace{-1em}
\end{table}

% =========================================================================
\subsection{Ablation of the Post-Training Stages (RQ3)}
\label{subsec:rq3-ablation}

Using Qwen2.5-7B, we examine how the three post-training stages shape runtime repair. As summarized in Table~\ref{tab:rq3-ablation}, Stage 1 establishes a useful structured repair policy and inspection gate; adding Stage 2 primarily reduces harmful interventions, indicating that preference contrasts sharpen when the model should act, whereas Stage 3 produces the larger recovery gain by grounding proposed repairs in execution outcomes. The full pipeline improves on both partial variants in-domain and on the unseen system, so preference learning and execution optimization remain complementary under distribution shift.
Notably, the unseen-system gaps remain directionally similar, suggesting that each stage learns transferable behavior rather than merely fitting framework-specific intervention patterns.
Figure~\ref{fig:rq3-ablation-transitions} confirms that these aggregate gains are not caused by unstable task churn. Every transition to a richer variant converts more failures into successes than successes into failures in both settings. Stage 3 creates the largest positive flow from Stage 1 and remains beneficial after Stage 2, while Stage 2 still contributes net recoveries after Stage 3. Together, the stages progress from structured intervention, to selective intervention, to execution-grounded recovery. Overall, the ablation validates the superiority of ARIA's complete staged post-training design over zero-shot and partial variants in delivering effective, selective, and transferable runtime repair.

\section{Conclusion}
\label{sec:conclusion}

We introduced ARIA, a MAS-agnostic framework for detecting and repairing failures during multi-agent execution.
From over 24{,}000 trajectories, ARIA constructs 3{,}953 execution-verified repair pairs that specify the failure, intervention point, and corrected continuation.
Its three-stage training combines structured supervision, outcome-anchored preference learning, and execution-grounded policy optimization, moving costly repair search offline.
Across diverse frameworks, benchmarks, and host models, ARIA delivers strong post-hoc and runtime repair performance while controlling harmful interventions and execution cost.
It further transfers to unseen host models and systems, with performance improving as the repair backbone scales.
Ablations confirm that all three training stages contribute to reliable recovery.
Together, these results establish execution-verified repair learning as a promising route toward more reliable multi-agent systems.

\clearpage
\subsection*{AI Use Statement}
We used generative AI tools solely for language editing and polishing. We did not use generative AI for research ideation, methodology design, code development, data collection, experiments, result analysis, or scientific claims. All AI-assisted text was manually reviewed and verified by the authors. We take full responsibility for the final content of this work, including any text produced with the aid of generative AI.

\subsection*{Ethics Statement}
This work studies automated repair for LLM-based multi-agent systems using established benchmarks and machine-generated execution trajectories. It does not involve human participants, personal or sensitive data, or deployment in high-stakes real-world settings. Nevertheless, automated interventions may introduce errors or unintended behavior and should not be treated as a guarantee of system safety. ARIA should therefore be evaluated under task-specific constraints and appropriate human oversight before real-world deployment.

\bibliography{references}
\bibliographystyle{iclr2027_conference}

\clearpage
\appendix
% =========================================================================
% Related Work
% =========================================================================

\section{Related Work}
\label{sec:related-work}

\emph{Failure taxonomies.}
A first line of work asks which failures occur in multi-agent executions and how often. \citet{hammond2025multiagentrisks} organize risks from advanced multi-agent systems into three failure modes, namely miscoordination, conflict, and collusion, and seven risk factors that can produce them, and argue that these risks are qualitatively distinct from those posed by individual agents. Restricting attention to LLM-based MAS, \citet{cemri2025mast} develop MAST from 150 expert-annotated traces with high inter-annotator agreement, obtaining 14 failure modes grouped into system-design issues, inter-agent misalignment, and task verification, and apply an LLM-as-a-judge annotator to release MAST-Data, a corpus of more than 1{,}600 traces collected across 7 MAS frameworks. Their case studies show that taxonomy-guided revisions to role specifications and workflows raise task success, which locates the interventions at the level of system design rather than of an individual run. \citet{jia2026masfire} instead perturb execution directly: MAS-FIRE injects 15 intra-agent and inter-agent fault types through prompt modification, response rewriting, and message-routing manipulation, and reports that stronger foundation models do not uniformly improve robustness and that coordination topology is an equally decisive factor. These studies establish what fails and under which conditions, but a taxonomy label assigned to a completed run does not determine which edit to that run would have recovered the task.

\emph{Failure diagnosis.}
A second line therefore localizes the failure within a specific trajectory. \citet{zhang2025whichagent} formulate automated failure attribution, that is, identifying the agent and the decisive step responsible for a failure, and release Who\&When, which contains failure logs from 127 LLM-based multi-agent systems with fine-grained annotations. Their evaluation quantifies how hard the task is: the best of the three methods they study reaches 53.5\% accuracy on the responsible agent but only 14.2\% on the decisive step, and state-of-the-art reasoning models do not reach practical usability. \citet{barke2026agentrx} target the same localization problem with explicit evidence rather than direct log inspection; AgentRx synthesizes constraints from domain policy and tool schemas, evaluates them step by step, and emits an auditable log of constraint violations from which a judge identifies the critical failure step and category, evaluated on a released benchmark of 115 annotated failed trajectories. \citet{li2026errorprobe} propose ErrorProbe, which combines symptom-driven backward tracing, tool-grounded hypothesis validation, and a verified episodic memory. All three operate on a terminated trajectory and return an explanation, so a correct localization narrows the search for a correction without supplying one, and none produces the inspection decision over a partial trajectory that runtime intervention requires.

\emph{Failure repair.}
A third line modifies the execution itself, with objectives that differ in what counts as success. \citet{xie2026sparktofire} model collaboration as a directed dependency graph, derive an early-stage criterion for error amplification, and add a genealogy-graph governance layer as a message-layer plugin that suppresses error propagation without altering the collaboration architecture; the objective is to prevent a local error from becoming false consensus rather than to recover a run that has already failed. \citet{liu2026safeagent} share this preventive orientation under a security objective: SafeAgent mediates actions around the agent loop through a stateful decision core and is evaluated on prompt-injection benchmarks, so it blocks unsafe actions rather than restoring task success. The two closest methods repair for the sake of task outcome. CausalFlow assigns each step a causal responsibility score by replacing it and re-executing the downstream computation, then generates minimally edited repairs whose validation yields contrastive pairs of a wrong and a corrected step~\citep{bonagiri2026causalflow}; it targets single-agent sequential traces, performs repair at deployment time without parameter updates, and leaves the use of its pairs for preference optimization to future work. DoVer operates on MAS with an explicit orchestrator and sub-agent topology, generates failure hypotheses from logs, and validates them through targeted interventions such as message or plan edits, scoring outcomes by resolution or measurable task progress rather than by attribution accuracy~\citep{ma2026dover}; it requires framework-level checkpoint and replay infrastructure, and reports that several distinct interventions can independently repair the same failed run. ARIA differs in what it produces. Rather than searching for a correction on each incident at test time, it aggregates execution-verified repairs across MAS frameworks, benchmarks, and foundation models into a single corpus and trains one structured predictor on it, so that a diagnosis, a cut point, and a replacement continuation are emitted in a single forward pass, a prefix-level inspection gate decides on a partial trajectory when to emit them, and no checkpoint or replay hook is required inside the host system. Table~\ref{tab:related-work-comparison} in Appendix~\ref{app:related-work-comparison} summarizes the resulting capability profile.

\section{Reproducibility and Implementation Details}
\label{app:reproduction}
\label{app:technical-details}

\subsection{Execution Configuration and Benchmark Evaluation}
\label{app:execution-configuration}

Each experiment is specified by a configuration file that fixes the MAS,
benchmark split and task selection, host model,
repair model, API protocol, decoding settings, execution limits, and evaluator.
We distinguish the source model that generated a failure from the endpoint
used for repair generation and paired reruns; these need not be identical.
Each sample records both identities and its generation settings.

The four native frameworks use separate dependency environments, while
benchmark loading and evaluation use the default environment.
The native revisions of OpenManus, MetaGPT, Magentic-One, and AG2 are
pinned and recorded with each experiment.
Magentic-One additionally requires a Playwright Chromium installation.
Repository-local adapters normalize native messages, tool activity, and
workspace artifacts without modifying these checkouts.

\begin{table}[htbp]
\centering
\small
\caption{Benchmark verification. Native termination alone does not establish
benchmark success.}
\label{tab:appendix-evaluators}
\begin{tabularx}{\linewidth}{@{}lX@{}}
\toprule
\textbf{Benchmark} & \textbf{Verification} \\
\midrule
GSM-Plus & Extracted numerical answer checked against the reference. \\
MMLU & Extracted option checked against the reference choice. \\
HumanEval & Generated code executed against the benchmark tests. \\
OlympiadBench & Answer-bearing mathematics tasks scored with the public mathematical-answer scorer. \\
GAIA & Validation tasks and attachments; public quasi-exact answer matching. \\
SWE-bench-Lite & Workspace patch evaluated by the official Docker harness. \\
\bottomrule
\end{tabularx}
\end{table}

The maintained ten-task integration matrix uses OlympiadBench rows
1215--1224, the first ten answer-bearing English, text-only mathematics
examples; the initial proof-only rows lack final-answer targets.
This diagnostic selection does not define the paper's full task pools.
GAIA attachments require an authorized Hugging Face account.
Missing official reports, container failures, and unresolved evaluator
outputs retain an unknown correctness label rather than becoming wrong answers.

\subsection{Source Admission and Incident Construction}
\label{app:dataset-details}
\label{app:dataset-benchmark-pool-size}

A source cell fixes the framework, benchmark, and host-model configuration.
Within each source batch, we deduplicate by sample identity and consult the
latest result. A cognitive repair incident requires both native completion
and an explicit incorrect evaluator verdict. Transport, runtime, tool-backend,
and evaluator failures are recorded separately and receive no cognitive label.

The incident contains the public task, constraints, and normalized observable
events, each with a stable identifier, role, kind, content, and tool fields.
Native traces are preferred; messages provide a fallback when traces contain
no observable events. Available workspace artifacts can supply additional
events. The default budget is 64 events; longer traces retain the first
four and last 60 events. Localization is relative to this visible incident,
and cannot certify an omitted event as the earliest error.
Post-hoc feedback discloses only that the public evaluator marked the source
incorrect. Reference answers, hidden tests, gold patches, and private
evaluation details are excluded from model inputs. References remain in the
audit record solely for host-side evaluation.

Successful source runs supply non-intervention supervision. For $F>0$
selected failures and configured rate $\rho>0$, the default success quota is
$\max(1,\lfloor\rho F\rfloor)$, capped by available successful runs; otherwise
it is zero. The loader takes eligible successes in source order, unless an
explicit count overrides the quota. The configured rate therefore defines
a failure-relative quota, not Bernoulli sampling of successful trajectories.
The maintained repair configurations set $\rho=0.25$.
These success records invoke no repair generation or paired reruns.

\subsection{Localization, Trajectory Cuts, and Repeated Repair Trials}
\label{app:repair-construction}

Each trial makes three successive structured predictions: failure
localization, repair attribution, and a patch. Localization identifies the
first bad event(s), an earlier retained frontier, supporting evidence, and
confidence. Attribution conditions on this localization and specifies one
primary RA code, optional secondary codes, responsible roles, a diagnosis,
a recovery counterfactual, and a patch operator. The final patch contains
only a diagnosis, the retained cut-event identifier, and replacement events.
Schema and grounding checks validate event references and their order;
they do not establish that a diagnosis is causally correct.
The patch must copy the accepted attribution's diagnosis exactly and cannot
reproduce the discarded suffix unchanged.

\paragraph{Cut semantics.}
For an observed trajectory $\tau=(e_1,\ldots,e_T)$, a cut at $e_k$ retains
that event and discards the entire suffix after it. The proposed continuation is
\begin{equation}
\label{eq:appendix-splice}
\tau'=(e_1,\ldots,e_k)\mathbin{\Vert}
       (\hat e_1,\ldots,\hat e_L).
\end{equation}
A null cut denotes an empty retained prefix ($k=0$).
The cut must precede the earliest localized bad event. Replacement identifiers
must not collide with retained identifiers, and the model cannot fabricate
tool-result events. Patches contain at most eight replacement events and
12{,}000 characters in their canonical JSON serialization.

\paragraph{Independent trials and validation retries.}
The configuration file sets the number $n$ of complete
localization--attribution--patch chains. Each chain starts from the same
incident rather than refining a previous trial. Bounded structured-output
retries correct a stage's invalid response and are logged within that trial;
they are not additional independent trials. The maintained construction
configurations currently use $n=1$; 18 allow two structured-output retries per
stage and six allow eight.
Increasing $n$ preserves every trial, including
invalid generations and failed repairs. Independent calls do not imply
distinct outputs under deterministic decoding.

\paragraph{Paired execution and acceptance.}
The host runs one shared unpatched control and one patched execution per
complete trial, in separate workspaces under the declared repair endpoint.
The control receives the original task. A patched run receives the task,
retained prefix, attribution, and replacement events, with instructions to
enact the patch and continue. Both use the benchmark's registered evaluator.
A positive repair requires the patched execution to pass and the fresh
control to fail; a passing source-independent
rerun is insufficient. All complete trials are executed, and the first
verified success is selected as the canonical repair. Every verified trial
can contribute training data; there is no model vote or ranking over trials.

The native backend reconstructs a continuation context rather than restoring
the framework's complete internal state. Paired success therefore validates recovery
under this reconstruction protocol, not checkpoint-exact causal replay.

\subsection{Training Projection and Data Partitioning}
\label{app:training-projection}

The detailed repair records are the source of truth. Aggregation derives
training examples from these records using a shared projection procedure,
rather than concatenating potentially stale per-experiment exports. It deduplicates
record identifiers, rejects conflicting records, checks answer blindness,
and requires each DPO input/chosen pair to match its structured-repair SFT row.
Counts of source tasks, repair trials, verified repairs, and training rows
are distinct: one incident may yield many gate prefixes and multiple
verified patches. Corpus and split manifests retain these counts separately.

\paragraph{Supervision eligibility.}
Valid model-generated localization and attribution labels from admitted
failures can train their respective tasks even when the resulting patch
fails. Structured-repair positives and repair preferences require a verified
passing patch paired with a failing control; heuristic fallbacks are excluded.
For the selected localization with first bad index $b$, source-prefix gate
targets are $z_t=\mathbb{1}[t\geq b]$. If no valid first-bad index is
available, only the terminal failure prefix receives a positive label.
Success-source prefixes receive zero. Prefixes extending into the selected
verified replacement also receive zero; this is a projected training label,
not a separate evaluator verdict for every intermediate prefix.
Gate inputs always omit evaluator feedback.

\paragraph{Preference negatives.}
The chosen output is a verified patch. A schema-valid, executed failed trial
is used as the rejected output when available. Otherwise, the projector
encodes the observed failed suffix after the chosen cut as a repair record,
retaining at most three events and reducing content to meet schema limits.
This fallback shares the chosen diagnosis and cut; it is not a newly executed
negative. Both outputs use the same event-identifier conventions, and
rejection labels remain audit metadata rather than model-visible cues.

\paragraph{Splits and held-out conditions.}
The splitter assigns task identifiers, not individual prefixes or trials,
to nominal 60\%/20\%/20\% train/validation/test partitions using a seeded
hash within source-cell strata. A task occurring in several cells receives
one global assignment, with its lexicographically first cell defining its
stratum. Consequently, per-cell proportions are approximate; strata with
fewer than three tasks are assigned to training. All derived rows inherit
their task's assignment. Training-view audits additionally check held-out
overlap by task identifiers and normalized task text.
For unseen-system evaluation, the held-out axis is
excluded from training, calibration, and checkpoint selection.
Subset caps and task identities are fixed by each study's manifest; rates
from different task pools are not directly comparable.

\paragraph{Experimental subsets.}
\label{app:rq-specific-subsets}
RQ1 uses the complete task-level partitions for its in-domain post-hoc and
runtime comparisons, whereas RQ2 and RQ3 use separate fixed subsets to make
repeated backbone-scaling and ablation runs computationally tractable. In
unseen-system conditions, all trajectories from the held-out framework are
excluded from training, calibration, validation, and checkpoint selection,
with RQ2 holding out Magentic-One or AG2 and RQ3 holding out Magentic-One;
the resulting checkpoints are evaluated on the corresponding fixed test
subset.

\subsection{Post-Training Procedure}
\label{app:post-training}

The three methodological stages comprise five sequential training phases:
structured SFT, repair DPO, diagnosis transfer, repair GRPO, and gate GRPO.
The reference trainer uses AdamW and LoRA, with scaling $2r$ for rank $r$
and zero LoRA dropout. Exporting merged weights does not change this into
full-parameter fine-tuning. Prompts contain task instructions and canonical
JSON. Input reduction preserves event identities and JSON structure; targets
are never truncated into incomplete JSON. Oversized or ungrounded records
are handled by the configured admission policy and recorded in the report.

\subsubsection{Stage 1: Structured Supervision}
\label{app:pt-stage1}

SFT predicts four target types: gate probability, localization, attribution,
and structured repair. Prompt tokens are masked from the causal language
model loss. In cascade training, attribution receives the recorded
localization and repair receives the recorded attribution; deployment uses
the model's own predictions, so errors can propagate between stages.
Feedback dropout removes evaluator feedback with the configured probability;
it does not paraphrase hidden evaluator information.

\subsubsection{Stage 2: Repair Preferences and Diagnosis Transfer}
\label{app:pt-stage2}

Repair DPO compares the chosen and rejected records under a shared input,
using frozen stage-start reference log probabilities. Diagnosis transfer
then constructs training-only incident families from sparse token-count
representations of feedback and repair edits. Similarity is the mean of
their cosine similarities; families are connected components of the
thresholded mutual-nearest-neighbor graph. Mining requires at least three
family members.

Candidates include the source diagnosis, policy-generated diagnoses, and
eligible sibling diagnoses. With model parameters fixed during mining,
each candidate is scored by the log-likelihood uplift it gives a sibling's
verified repair relative to the same input without that diagnosis.
The selected diagnosis maximizes calibration-sibling uplift subject to the
configured threshold and positive uplift on one reserved training sibling.
The lowest-scoring distinct candidate supplies the preference rejection.
The reserved sibling is part of training, not the validation or test set.
Diagnosis DPO is followed by SFT on the complete verified repair record with
the selected diagnosis substituted. This signal measures likelihood-based
transfer, not newly executed recovery.

\subsubsection{Stage 3: Execution-Grounded Updates}
\label{app:pt-stage3}

Repair optimization samples diagnosis branches and multiple patches per
branch. In cascade mode, each branch first generates grounded localization
and attribution. Valid patches are scored by the configured environment;
native environments execute the MAS, whereas evaluator-only and recorded
outcome backends do not establish native execution. The default repair reward is
\begin{equation}
\label{eq:appendix-repair-reward}
r=\mathbb{1}[\mathrm{pass}]
  -\mathbb{1}[\mathrm{harmful}]-0.05\,b,
\end{equation}
where $b$ is the backend-recorded execution cost. Static rejection receives
$-1$ without execution; infrastructure-invalid or unresolved outcomes
provide no correctness reward.
Diagnosis value combines source and available cross-cell sibling rewards
with default transfer weight $0.5$. Without a scoreable sibling it falls
back to the source reward, which is recorded as missing transfer evidence.
Diagnosis/localization outputs receive advantages standardized across
branches; patch outputs receive advantages standardized within a branch.
Groups without sufficient reward variation are discarded.

The trainer uses a reference-policy penalty (default coefficient $0.02$)
and a token-level GRPO update. Its current single inner update computes the
old log probability by detaching the same forward pass; the probability
ratio is therefore one at evaluation. Although clipping bounds are
configured, this does not constitute multiple clipped PPO epochs.

Gate optimization evaluates sampled decisions on historical trajectory
prefixes, with the first trigger invoking repair. It uses a paired
same-host control and the reward
$S_{\mathrm{int}}-S_{\mathrm{base}}-0.05\,\Delta\mathrm{EB}$.
Historical-prefix rollouts are distinct from live interception starting at
task initialization. The scalar gate mode updates a global logit bias while
freezing repair parameters; the contextual mode updates shared LoRA
parameters with gate regularization and a repair SFT anchor, and thus does
not freeze the repair model. The selected mode is part of checkpoint identity.

\paragraph{Concrete reference recipe.}
The reference training configuration specifies Qwen2.5-7B-Instruct,
seed 42, one epoch, learning rate $2\times10^{-4}$,
LoRA rank 16, gradient accumulation 8, BF16, and gradient checkpointing.
It sets a 4{,}096-token total budget, a 1{,}024-token prompt budget,
feedback dropout $0.25$, and DPO $\beta=0.1$.
Its repair GRPO budget is 12 incidents, three diagnosis branches and three
patches per branch, one transfer sibling, and four groups per update;
gate optimization uses three rollouts, at most four probes, and one outer
round. These are reference-recipe settings, not universal settings for all
released checkpoints: experiment-specific configuration files and training reports determine
the actual row admissions, sampling budgets, and optimizer updates.

\subsection{Integration with Native MAS Runtimes}
\label{app:native-runtime}

ARIA is attached at the native model-response boundary. Before a generated
response is delivered to its framework, the adapter normalizes the actual
model request and pending response into an observable prefix.
No final evaluator verdict or future event is available. The inspection
gate predicts a probability; a value below the configured threshold
delivers the original response. A trigger invokes localization, attribution,
and structured repair, followed by host-side validation and capability checks.
The host, rather than the repair model, decides whether the proposal can be
enacted.

The current four native integrations support at most one pending plain-text
message substitution per run. An applicable patch cuts immediately before
the pending response and supplies a single compatible replacement with the
same role and nonempty changed content. Tool calls, protected structured
responses, multi-event continuations, and historical rollback cannot be
enacted by this interface; such proposals remain shadow records.
In shadow mode all proposals are recorded without substitution.
The framework continues normally after an applied replacement.

Malformed predictions are recorded as method errors. Under fail-open
operation, the original response is delivered, but the error is not relabeled
as a valid negative gate decision. Audit records retain the input prefix,
raw ARIA calls, proposed patch, original and delivered responses, and
enactment status. A proposal or unchanged replacement alone is not evidence
of an intervention.

\paragraph{Baseline adaptations and controls.}
The comparison methods are repository-local adaptations over the common
event and execution interfaces, rather than executions of official baseline
repositories. Post-hoc methods receive observable failed logs, public
feedback, and task constraints. Zero-shot emits the common repair schema;
Who\&When, AgentRx, and ErrorProbe add their localization procedures before
shared repair generation. DoVer generates interventions for segmented trials
and evaluates them through the declared replay backend.
Live Spark to Fire and SafeAgent adaptations receive prefixes without
feedback and share the pending-text substitution limit; SafeAgent is a
text-boundary reference, not its full safety architecture.
Within a comparison panel, task identities, host configuration, execution
limits, and decoding settings must match. Method-specific prompts and
additional calls remain visible in the audit and cost records.

\subsection{Success, Harm, and Execution-Cost Accounting}
\label{app:metric-accounting}

\paragraph{Denominators.}
Let $\mathcal{P}$ be the fixed evaluation pool. Post-hoc evaluation contains
evaluator-confirmed failed source trajectories; runtime evaluation contains
all selected tasks, including those that never trigger intervention.
For a fully resolved pool of size $N$, let $s_i\in\{0,1\}$ denote benchmark
success, $q_i$ the graded score, and $q_i^0$ the paired untreated score.
The runtime composite harm indicator is
\begin{equation}
\label{eq:appendix-hir}
h_i=\mathbb{1}\!\left[
u_i\lor v_i\lor(a_i\land q_i<q_i^0)\right],\qquad
\mathrm{SR}=\frac{\sum_i s_i}{N},\quad
\mathrm{HIR}=\frac{\sum_i h_i}{N},
\end{equation}
where $u_i$ denotes a malformed generated prediction, $v_i$ an explicitly
recorded task-constraint violation, and $a_i$ an enacted intervention.
With binary grading, outcome harm reduces to a passing control becoming a
failure after intervention. Several harmful events on one task count once.
HIR is task-normalized, not normalized by gate triggers or by successful
controls. The pooled rate weights tasks equally, not framework or benchmark
cells equally. For repeated training seeds, the figure aggregator averages
the per-seed rates with equal seed weight.

\paragraph{Resolution and live-report scope.}
The strict evaluation reporter withholds headline SR/HIR until every planned
case is resolved; resolved-subset rates are progress diagnostics.
The live artifact reporter separately exposes terminal-pool host success
and a \emph{harmful-substitution lower bound}: score regressions with an
effective delivered substitution, divided by the full planned pool.
It reports unresolved cases and decision errors separately. This lower
bound excludes malformed-but-unapplied proposals and must not be substituted
for the composite HIR in Equation~\ref{eq:appendix-hir}.
Regressions without applied intervention are reported separately from
harmful substitutions.

\paragraph{Execution budget and repeated sampling.}
Execution budget is the mean sum of recorded full-MAS execution counts per
task. Runtime includes the initial execution; post-hoc budget excludes the
already acquired failed source, with source-inclusive cost reported separately.
Additional method-required reruns count toward cost. ARIA model calls, tokens,
and latency are reported separately: a single-run runtime budget does not
imply zero inspection overhead.
Dataset construction executes all configured trials. In the separate
post-hoc budget sweep, the reporter consumes candidates in order and stops
at the first evaluator-confirmed success. A runtime sweep cannot select
with hidden evaluator verdicts: the implemented multi-candidate reporter
uses agreement on public outputs, breaking ties by candidate order.
These evaluation selection rules are distinct from construction-trial
selection and from diagnosis/patch sampling during GRPO.

\subsection{Artifacts and Resume Semantics}
\label{app:artifacts}

Reproduction requires the complete configuration, repository and native revisions,
model/tokenizer identities, source and split manifests, checkpoint lineage,
and generation settings. Dataset artifacts retain all prompts, raw and
parsed structured responses, validation errors, paired outcomes, and
per-sample endpoint provenance. Training exports contain canonical inputs
and targets without the generation-prompt wrappers. Reports retain corpus
checksums, projection drift, admitted/rejected row counts, backend identity,
and completed updates; a configured budget is not evidence of completion.

Repair resume skips a sample only when its identifier is present in the
formal dataset. Otherwise, partial sample artifacts are removed and all
configured trials restart from localization. Each dataset identity has one
writer. Native batch resume separately skips completed samples and retries
failed samples in clean workspaces. These sample-level guarantees do not
imply exact optimizer/RNG restoration from intermediate training checkpoints.

\section{Repair Attribution Taxonomy}
\label{app:repair-attributions}

\begin{table*}[htbp]
\centering
\caption{Repair attributions comprise six groups and 26 categories.
Each tag names a category associated with a candidate patch operator;
colors indicate group membership.
Group and category names are shortened for presentation.}
\label{tab:repair-attribution}
\small
\setlength{\tabcolsep}{6pt}
\renewcommand{\arraystretch}{1.12}
\begin{tabularx}{\textwidth}{
  @{}
  >{\raggedright\arraybackslash}p{0.31\textwidth}
  >{\raggedright\arraybackslash
    \setlength{\baselineskip}{15pt}}X
  @{}
}
\toprule
\textbf{Group} & \textbf{Categories} \\
\midrule

\RAGroup{RAOne}{RA-1}{Understanding \& Planning}
& \RACategory{RAOne}{RA-1.1}{Task misunderstanding}\hspace{0.4em}%
  \RACategory{RAOne}{RA-1.2}{Contract misunderstanding}\newline
  \RACategory{RAOne}{RA-1.3}{Faulty decomposition}\hspace{0.4em}%
  \RACategory{RAOne}{RA-1.4}{Faulty role planning} \\
\addlinespace[5pt]

\RAGroup{RATwo}{RA-2}{Evidence Grounding}
& \RACategory{RATwo}{RA-2.1}{Insufficient retrieval}\hspace{0.4em}%
  \RACategory{RATwo}{RA-2.2}{Misgrounded evidence}\newline
  \RACategory{RATwo}{RA-2.3}{Extraction/aggregation error}\hspace{0.4em}%
  \RACategory{RATwo}{RA-2.4}{Unsupported claim} \\
\addlinespace[5pt]

\RAGroup{RAThree}{RA-3}{Reasoning \& Decisions}
& \RACategory{RAThree}{RA-3.1}{Invalid inference}\hspace{0.4em}%
  \RACategory{RAThree}{RA-3.2}{Computation error}\newline
  \RACategory{RAThree}{RA-3.3}{Finite evidence as proof}\hspace{0.4em}%
  \RACategory{RAThree}{RA-3.4}{Incorrect decision} \\
\addlinespace[5pt]

\RAGroup{RAFour}{RA-4}{Agent Coordination}
& \RACategory{RAFour}{RA-4.1}{Lost shared state}\hspace{0.4em}%
  \RACategory{RAFour}{RA-4.2}{Ignored correct evidence}\newline
  \RACategory{RAFour}{RA-4.3}{Failed handoff}\hspace{0.4em}%
  \RACategory{RAFour}{RA-4.4}{Non-progress cycle} \\
\addlinespace[5pt]

\RAGroup{RAFive}{RA-5}{Tools \& Implementation}
& \RACategory{RAFive}{RA-5.1}{Unexecuted tool call}\hspace{0.4em}%
  \RACategory{RAFive}{RA-5.2}{Incorrect tool call}\newline
  \RACategory{RAFive}{RA-5.3}{Unrecovered tool failure}\hspace{0.4em}%
  \RACategory{RAFive}{RA-5.4}{Unmaterialized artifact}\newline
  \RACategory{RAFive}{RA-5.5}{Faulty implementation} \\
\addlinespace[5pt]

\RAGroup{RASix}{RA-6}{Verification \& Completion}
& \RACategory{RASix}{RA-6.1}{Missing verification}\hspace{0.4em}%
  \RACategory{RASix}{RA-6.2}{Inadequate verification}\newline
  \RACategory{RASix}{RA-6.3}{Accepted failed check}\hspace{0.4em}%
  \RACategory{RASix}{RA-6.4}{Premature termination}\newline
  \RACategory{RASix}{RA-6.5}{Output handling error} \\

\bottomrule
\end{tabularx}
\end{table*}

Each attribution identifies one primary localized mechanism; secondary
codes can record contributors. For example, treating finite search as proof
(RA-3.3) suggests replacing the empirical claim with a deductive argument,
whereas an ignored failed check (RA-6.3) calls for revising the decision to
accept the result. The attribution proposes what to change; paired
execution determines whether the resulting patch provides positive
repair supervision.

\section{Repair Utility: Proof and Counting Unit}
\label{app:repair-utility-proof}

\begin{proof}[Proof of Proposition~\ref{prop:repair-utility-score}]
For valid source execution $i$, let $F_i$ indicate source failure and $R_i$
indicate successful repair under the fixed repair protocol. Then
$\mathbb{E}[F_iR_i]=(1-a(c))r(c)$. Summing over $N$ source executions
gives $\mathbb{E}[\sum_i F_iR_i]=N(1-a(c))r(c)=NS(c)$.
\end{proof}

The counting unit must remain fixed when estimating $r(c)$.
For a protocol with $n$ trials and one shared control, define repair success
at the incident level as at least one verified passing patch paired with
a failing control.
The utility then measures incidents yielding usable repairs per valid source
execution. It does not count the number of successful trial-level training
rows. Trial count, endpoint, and execution limits must be held fixed when
comparing source cells; unresolved outcomes are reported separately and
never counted as verified repairs.

\clearpage
\section{Related-Work Capability Comparison}
\label{app:related-work-comparison}

Table~\ref{tab:related-work-comparison} supplements Section~\ref{sec:related-work} with a comparison of the capabilities explicitly reported by each method.

\begin{table}[htbp]
\centering
\caption{Capabilities that each method explicitly reports, where \emph{verified repair} requires re-executing a corrective modification under a task evaluator and a check mark records a reported capability rather than a performance comparison.}
\label{tab:related-work-comparison}
\footnotesize
\setlength{\tabcolsep}{3.5pt}
\begin{tabular}{@{}lcccccc@{}}
\toprule
\textbf{Work} & \makecell{\textbf{Failure}\\\textbf{taxonomy}} & \makecell{\textbf{Trace}\\\textbf{corpus}} & \makecell{\textbf{Step}\\\textbf{attribution}} & \makecell{\textbf{Test-time}\\\textbf{intervention}} & \makecell{\textbf{Verified}\\\textbf{repair}} & \makecell{\textbf{Learned}\\\textbf{predictor}} \\
\midrule
Multi-Agent Risks~\citeyearpar{hammond2025multiagentrisks} & \checkmark & & & & & \\
MAST~\citeyearpar{cemri2025mast} & \checkmark & \checkmark & & & & \\
MAS-FIRE~\citeyearpar{jia2026masfire} & \checkmark & \checkmark & & \checkmark & & \\
Who\&When~\citeyearpar{zhang2025whichagent} & & \checkmark & \checkmark & & & \\
AgentRx~\citeyearpar{barke2026agentrx} & \checkmark & \checkmark & \checkmark & & & \\
ErrorProbe~\citeyearpar{li2026errorprobe} & & & \checkmark & & & \\
Spark to Fire~\citeyearpar{xie2026sparktofire} & & & & \checkmark & & \\
SafeAgent~\citeyearpar{liu2026safeagent} & & & & \checkmark & & \\
CausalFlow~\citeyearpar{bonagiri2026causalflow} & & & \checkmark & \checkmark & \checkmark & \\
DoVer~\citeyearpar{ma2026dover} & & & \checkmark & \checkmark & \checkmark & \\
\midrule
\textbf{ARIA (proposed)} & \checkmark & \checkmark & \checkmark & \checkmark & \checkmark & \checkmark \\
\bottomrule
\end{tabular}
\end{table}

\end{document}